\newcommand*{\rom}[1]{\expandafter\@slowromancap\romannumeral #1@}
\newtheorem{theorem}{Theorem}
\def\BState{\State\hskip-\ALG@thistlm}
\DeclareMathOperator*{\argmax}{arg\,max}
\DeclareSymbolFont{matha}{OML}{txmi}{m}{it}% txfonts
\DeclareMathSymbol{\varv}{\mathord}{matha}{118}
\begin{document}
%
% paper title
% Titles are generally capitalized except for words such as a, an, and, as,
% at, but, by, for, in, nor, of, on, or, the, to and up, which are usually
% not capitalized unless they are the first or last word of the title.
% Linebreaks \\ can be used within to get better formatting as desired.
% Do not put math or special symbols in the title.
\title{%Multi-Agent Distributed Informative Planning With Gaussian Process and Monte Carlo Tree Search\\
Fully Distributed Informative Planning for Environmental Learning with Multi-Robot Systems}
%
%
% author names and IEEE memberships
% note positions of commas and nonbreaking spaces ( ~ ) LaTeX will not break
% a structure at a ~ so this keeps an author's name from being broken across
% two lines.
% use \thanks{} to gain access to the first footnote area
% a separate \thanks must be used for each paragraph as LaTeX2e's \thanks
% was not built to handle multiple paragraphs
%

% Make room for more info lines in the \author command   
\author{Dohyun Jang$^{1}$, Jaehyun Yoo$^{2}$, Clark Youngdong Son$^{3}$, and H. Jin Kim$^{1}$% 
\thanks{$^{1}$Dohyun Jang and H. Jin Kim are with the Department of Aerospace Engineering, Seoul National University (SNU), Seoul, South Korea (e-mail: dohyun@snu.ac.kr; hjinkim@snu.ac.kr).}% 
\thanks{$^{2}$Jaehyun Yoo is with the School of AI Convergence, Sungshin Women's University, Seoul, South Korea (e-mail: jhyoo@sungshin.ac.kr).}% 
\thanks{$^{3}$Clark Youngdong Son is with Mechatronics R\&D Center, Samsung Electronics, Hwaseong, South Korea (e-mail: y0dong.son@samsung.com).}% 
\thanks{Digital Object Identifier (DOI): see top of this page.} } %Use only for final RAL version.  

% note the % following the last \IEEEmembership and also \thanks - 
% these prevent an unwanted space from occurring between the last author name
% and the end of the author line. i.e., if you had this:
% 
% \author{....lastname \thanks{...} \thanks{...} }
%                     ^------------^------------^----Do not want these spaces!
%
% a space would be appended to the last name and could cause every name on that
% line to be shifted left slightly. This is one of those "LaTeX things". For
% instance, "\textbf{A} \textbf{B}" will typeset as "A B" not "AB". To get
% "AB" then you have to do: "\textbf{A}\textbf{B}"
% \thanks is no different in this regard, so shield the last } of each \thanks
% that ends a line with a % and do not let a space in before the next \thanks.
% Spaces after \IEEEmembership other than the last one are OK (and needed) as
% you are supposed to have spaces between the names. For what it is worth,
% this is a minor point as most people would not even notice if the said evil
% space somehow managed to creep in.

% Paper headers  
\markboth{IEEE} {IEEE}   % Use only for final RAL version 

% The only time the second header will appear is for the odd numbered pages
% after the title page when using the twoside option.
% 
% *** Note that you probably will NOT want to include the author's ***
% *** name in the headers of peer review papers.                   ***
% You can use \ifCLASSOPTIONpeerreview for conditional compilation here if
% you desire.

% If you want to put a publisher's ID mark on the page you can do it like
% this:
%\IEEEpubid{0000--0000/00\$00.00~\copyright~2015 IEEE}
% Remember, if you use this you must call \IEEEpubidadjcol in the second
% column for its text to clear the IEEEpubid mark.

% use for special paper notices
%\IEEEspecialpapernotice{(Invited Paper)}

% make the title area
\maketitle

% \renewcommand{\sout}[1]{\unskip} % used to remove all souts
% \colorlet{blue}{black} % used to remove blue color

% As a general rule, do not put math, special symbols or citations
% in the abstract or keywords.
\begin{abstract}
This paper proposes a cooperative environmental learning algorithm working in a fully distributed manner. A multi-robot system is more effective for exploration tasks than a single robot, but it involves the following challenges: \romannumeral 1) online distributed learning of environmental map using multiple robots; \romannumeral 2) generation of safe and efficient exploration path based on the learned map; and \romannumeral 3) maintenance of the scalability with respect to the number of robots. To this end, we divide the entire process into two stages of environmental learning and path planning. Distributed algorithms are applied in each stage and combined through communication between adjacent robots. The environmental learning algorithm uses a distributed Gaussian process, and the path planning algorithm uses a distributed Monte Carlo tree search. As a result, we build a scalable system without the constraint on the number of robots. Simulation results demonstrate the performance and scalability of the proposed system. Moreover, a real-world-dataset-based simulation validates the utility of our algorithm in a more realistic scenario.

\end{abstract}

% Note that keywords are not normally used for peerreview papers.
\begin{IEEEkeywords}
Multi-Robot Systems, Distributed Systems, Informative Planning, Environmental Learning, Gaussian Process.
\end{IEEEkeywords}

% For peer review papers, you can put extra information on the cover
% page as needed:
% \ifCLASSOPTIONpeerreview
% \begin{center} \bfseries EDICS Category: 3-BBND \end{center}
% \fi
%
% For peerreview papers, this IEEEtran command inserts a page break and
% creates the second title. It will be ignored for other modes.
\IEEEpeerreviewmaketitle

\section{Introduction}
Robotic sensor networks, which combine the local sensing capabilities of various sensors with the mobility of robots, can provide more versatility than conventional fixed sensor networks due to their capability to extend the sensing range and improve the resolution of sensory data maps \cite{deng2017energy}. These networks have been studied extensively in survey of global environment \cite{matthew2017boustrophedon,feng2020an,patrikar2020wind,ma2016an}, industrial environment perception \cite{zhang2019intelligent}, radio signal search \cite{li2020multi}, and so on.

To construct sensor networks, we first deploy many sensors in a working space. Then, we establish communication channels with the central server to collect and fuse data acquired from all sensors. Since the wireless communication range of sensors is limited, sensors usually make an indirect connection with the central server, such as a mesh network that connects all sensors and the central server by relay channels.

However, the relay network requires a routing table that must be rebuilt every time the robot network is reconfigured, which is cumbersome for robotic sensor networks. This problem is particularly noticeable in unmanned aerial vehicles (UAVs) or small robots since they need to use relatively weak communication modules to reduce power consumption.

Decentralizing the system can be a proper solution to network problems by removing the dependency of robots on the central server. For example, if a robot can infer the entire sensory map only from the local information directly provided by surrounding robots, the search task can be completed without the help of the central server. This paper applies decentralization to the environmental learning phase and the path planning phase, respectively. With an online information fusion algorithm, we build a distributed autonomous system of multiple robots to search and learn even dynamic environments that change over time.

\begin{figure}[t]
\begin{center}
\includegraphics[width=0.45\textwidth]{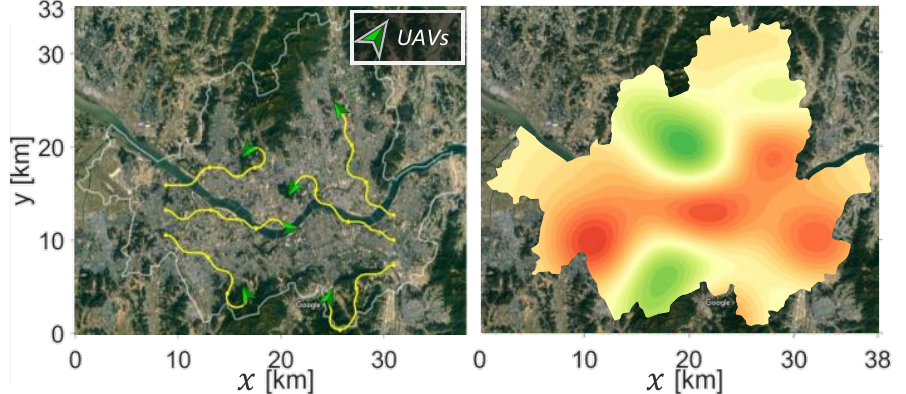}
\caption{Temperature monitoring simulation in Seoul using multiple UAVs; (left) trajectories of UAVs performing cooperative work through a distributed communication network, and (right) the reconstructed temperature map.} 
\label{fig:overview}
\end{center}
\vspace{-0.5cm}
\end{figure}

\subsection{Literature Review}
The first part of our work is multi-robot environmental learning in a distributed manner. For environmental learning, some useful techniques exist such as Gaussian mixture model (GMM) \cite{shi2020adaptive,luo2019distributed,gu2008distributed}, finite element method (FEM) \cite{elwin2020distributed}, and Gaussian process (GP) regression \cite{patrikar2020wind,ma2016an,li2020multi}. In particular, GP is a popular approach that derives a spatial relationship between sampled data using a kernel and performs Bayesian inference for prediction at an unknown region.

However, most GP-related researches focus on centralized systems, making it difficult to expand to large-scale multi-robot systems due to network resource limitations such as channel bandwidth and transmit power. Distributed multi-agent Gaussian regression is introduced in \cite{pillonetto2019distributed}, which designs a finite-dimensional GP estimator by using Karhunen–Loève (KL) expansion \cite{levy2008karhunen}. In contrast to the decentralized GP presented in \cite{viseras2016decentralized}, the distributed GP provides a common copy of the global estimate to all agents by exchanging the estimated information with their neighbors. This paper extends \cite{jang2020multi}, which shows that distributed GPs can construct environmental models using mobile robots in order to take the distributed path planning into account.

The second part of our work is informative path planning in a distributed multi-robot system. As an initial study of informative path planning, the problem of optimal sensor placement has been investigated to create an environmental map in a given space by properly placing a finite number of sensors \cite{krause2008near,luo2019distributed,gu2008distributed}.
Since then, by applying GPs and information theory, the research of optimal sensor placement has grown into the informative path planning research as presented in \cite{ma2016an,shi2020adaptive,flaspohler2019information,silveria2018reconstruction,ma2017informative,meliou2007nonmyopic,bottarelli2019orienteering}. Some studies have combined GP with conventional planning algorithms such as rapidly-exploring random tree (RRT) \cite{yang2013a}, dynamic programming (DP) \cite{ma2017informative}, or Monte Carlo tree search (MCTS) \cite{chen2019pareto}.

Besides the above approaches that mainly focus on informative path planning for single agents, many studies have applied informative planning for multi-robot systems. In \cite{viseras2016decentralized,li2020multi}, although both studies deal with decentralized multi-robot exploration using GP, these algorithms are not scalable as they consider only two robots. \cite{du2021parallelized} introduced the combination of the Kalman filter (KF) and the reduced value iteration (RVI) method for the parallelized active information gathering. While this technique is scalable to a large number of robots, it is noted that the environmental model has to be known, and only discrete environments can be represented since the model is expressed in KF. Considering the scalability for multi-robot systems, we extend the MCTS path planning in a distributed manner to be compatible with the distributed GP.

\subsection{Our Contribution}
To achieve our goal of fully distributed multi-robot informative planning, we divide the whole process into two phases: environmental learning and path planning. During these phases, we focus on three main contributions as follows.

• We develop an online distributed GP algorithm for environmental learning through Karhunen–Loève expansion and an infinite impulse response filter. This algorithm is capable of learning a dynamic environment.

• We propose a distributed informative path planning algorithm using a distributed MCTS combined with GP. In addition, we introduce the trajectory merging method to consider predicted trajectories of other agents.

• We build a fully distributed exploration and learning architecture using only local peer-to-peer communication for system scalability, as shown in Table \ref{table:scalability}.

We perform a multi-robot exploration simulation with a virtual environment setting and real-world dataset \cite{ncdc} provided by the National Climate Data Center (NCDC) in South Korea as shown in Fig. \ref{fig:overview}. 

The outline of this paper is as follows. Section \rom{2} briefly describes a multi-robot system setup and preliminaries. Section \rom{3} presents a method for online distributed environmental learning. Section \rom{4} combines environmental learning and MCTS in the distributed system. Simulations for the synthetic environment and real-world dataset are presented in Section \rom{5}. Section \rom{6} concludes the paper.

\begin{table}[ht]
\caption{Scalability comparison between centralized and distributed systems for multi-agent tasks. See text for symbols.}
\label{table:scalability}
\centering
\begin{tabular}{|l||l|l|}
\hline
                      & \multicolumn{1}{c|}{\textbf{Centralized}} & \multicolumn{1}{c|}{\textbf{Distributed (ours)}} \\ \hline
\begin{tabular}[c]{@{}l@{}}GP Computation\\ Complexity\end{tabular} & $O((mn)^3)$\quad\hfill (\cite{li2020multi,viseras2016decentralized})& $O(E^3)$\quad\hfill (\cite{jang2020multi})                                         \\ \hline
\begin{tabular}[c]{@{}l@{}}MCTS Planner \\Action Cardinality\end{tabular} & $\lvert \mathcal{A} \rvert^n$\quad\hfill (\cite{kartal2016monte})& $\lvert \mathcal{A} \rvert$\quad\hfill (\cite{best2019dec})\\ \hline
\begin{tabular}[c]{@{}l@{}}Communication\\ Complexity\end{tabular} & $O(n^2)$\quad\hfill (\cite{kartal2016monte,shi2020adaptive})& $O(n)$\quad\hfill (\cite{li2020multi,viseras2016decentralized,jang2020multi})                            \\ \hline
\end{tabular}
\vspace{-0.2cm}
\end{table}

\iffalse
GP cent | li2020multi,viseras2016decentralized
GP dist | jang2020multi

MCTS cent | kartal2016monte
MCTS dist | best2019dec

Comm cent | kartal2016monte,shi2020adaptive
Comm dist | li2020multi,viseras2016decentralized,jang2020multi
\fi
\section{Multi-Robot System Setup and Preliminaries}\label{sec:problem_statement}
We focus on the environment learning problem in multi-robot systems by considering a target domain as a 3-dimensional compact set $\mathbb{X}_w\subset\mathbb{R}^3$. Multiple robots (e.g., ground vehicles or UAVs with onboard sensors) explore an unknown area and estimate environmental information using both self-measurements and shared data received from neighbors. All robots can discover obstacles nearby using the range sensor and only communicate with adjacent robots within the communication distance.

\begin{figure*}[ht]
\begin{center}
\includegraphics[trim = 0mm 0mm 0mm 0mm, clip, width=0.8\textwidth]{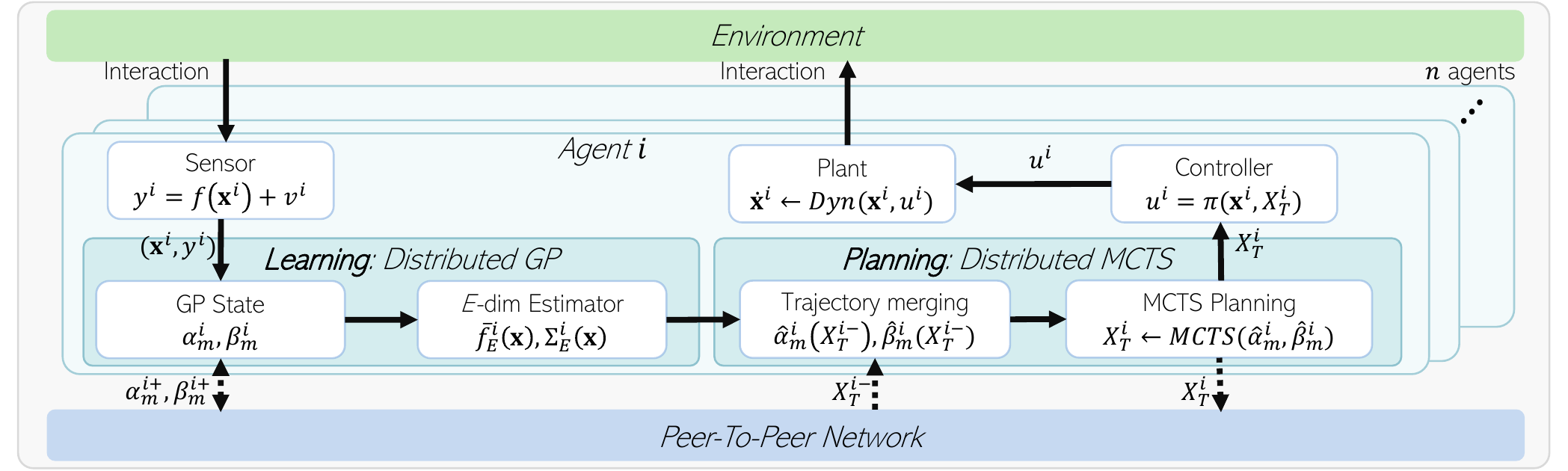} 
\caption{Structure of the distributed exploration and environmental model learning system. Each agent has its own distributed GP and distributed MCTS planner modules that operate through peer-to-peer communication with each other.} 
\label{fig:structure}
\end{center}
\vspace{-0.5cm}
\end{figure*}

\iffalse
\begin{table}[]
\centering
\caption{Temporary nomenclature.}
\begin{tabular}{|l||l|}
\hline
 & \multicolumn{1}{c|}{\textbf{Meaning}} \\ \hline
$n$ & Number of robots \\ \hline

$\mathcal{N}=\{1,2,\cdots,n\}$ & Index set of robots \\ \hline

$y_k^i,\quad i\in\mathcal{N}$ & measurement of robot $i$ at time $k$ \\ \hline

$\mathbf{x}_k^i\in\mathbb{X}_w,\quad i\in\mathcal{N}$ & position of robot $i$ at time $k$ \\ \hline

$\mathbb{X}_w$ & Compact set of position domain \\ \hline

$f(\cdot)$ & Environmental process \\ \hline

$\mathcal{N}_{k}^{i}=\{j|\ {||\mathbf{x}_{k}^{i}-\mathbf{x}_{k}^{j}||}<d_{comm},j\in\mathcal{N}/i\}$ & Set of neighbors for robot $i$ \\ \hline

$d_{comm}$ & Communication distance \\ \hline

$\mathcal{N}^{i+}=\mathcal{N}^{i}\cup \{i\}$ & Set of neighbors including itself\\ \hline
\end{tabular}
\end{table}
\fi

\subsection{Multi-Robot System Setup}\label{subsec:multi-robot_sytem_setup}
As depicted in Fig. \ref{fig:overview}, we consider $n$ robot agents exploring the environment. Each robot $i$ takes the measurement $y_k^i$ of an unknown environmental process $f(\cdot)$ in its position $\mathbf{x}_{k}^{i}\in \mathbb{X}_w$ ($i=1,\cdots,n$) at time $k$ which has the following relationship:
\begin{ceqn}
\begin{equation}\label{eq:measurement_model}
\renewcommand{\arraystretch}{1.2}% like cases
\setlength{\arraycolsep}{2pt}%
\begin{array}{ r>{{}}l @{\quad} l @{\quad} r>{{}}l @{\quad} l }
y_{k}^{i}&=&f(\mathbf{x}_{k}^{i})+v_{k}^{i},\\
\end{array}
\end{equation} 
\end{ceqn}
where the measurement of $f(\mathbf{x}_{k}^{i})$ is corrupted by the additive white Gaussian noise $v_{k}^{i}\sim\mathcal{N}(0,\sigma_{v}^{2})$.

Each robot has its process modules, \textit{Distributed GP} and \textit{Distributed MCTS}, for the distributed monitoring task. During these processes, they share GP variables and predicted trajectories through a peer-to-peer communication network. This operation process is summarized in Fig. \ref{fig:structure}. The controller design process is not covered in this work.

To implement the communication network of $n$ robots, we define a set of neighbors for robot $i$ as $\mathcal{N}_{k}^{i}=\{j|\ {||\mathbf{x}_{k}^{i}-\mathbf{x}_{k}^{j}||}<d_{comm},j\in\mathcal{N}/i\}$, where $\mathcal{N}=\{1,2,\cdots,n\}$ is the index set of agents and $d_{comm}$ is the communication range. $\mathcal{N}^{i+}$ means $\mathcal{N}^{i}\cup \{i\}$. For arbitrary variable $A$, $A^{i+}$ means $\{A^{j}\}_{j\in \mathcal{N}^{i+}}$, and $A^{i-}$ means $\{A^{j}\}_{j\in \mathcal{N}^{i}}$ for brevity.

\subsection{Conventional Gaussian Process}\label{subsec:conventional_gp}
GP regression, which is data-driven non-parametric learning, can provide Bayesian inference over the set $\mathbb{X}_w$, taking into account joint Gaussian probability distribution between the sampled dataset \cite{rasmussen2003gaussian}. In \eqref{eq:measurement_model}, the unknown process model $f(\cdot)$ is assumed to follow a zero-mean Gaussian process as

\begin{ceqn}
\begin{equation}\label{eq:process_model}
\renewcommand{\arraystretch}{1.2}% like cases
\setlength{\arraycolsep}{2pt}%
\begin{array}{ r>{{}}l @{\quad} l @{\quad} r>{{}}l @{\quad} l }
f(\cdot)\sim \mathcal{G}\mathcal{P}(0,\kappa(\mathbf{x'},\mathbf{x''})).\\
\end{array}
\end{equation} 
\end{ceqn}
$\kappa(\mathbf{x'},\mathbf{x''})$ is a \textit{kernel} or \textit{covariance function} for positions $\mathbf{x'},\mathbf{x''}\in\mathbb{X}_{w}$, and the original \textit{squared exponential} (SE) kernel is defined as

\begin{ceqn}
\begin{align}\label{eq:kernel}
\kappa(\mathbf{x'},\mathbf{x''})=\sigma_{s}^2\text{exp}(-\frac{1}{2}(\mathbf{x'}-\mathbf{x''})^\top\Sigma_{l}^{-1}(\mathbf{x'}-\mathbf{x''})),
\end{align}
\end{ceqn}
where $\sigma_{s}^2$ is the signal variance of $f(\cdot)$, and $\Sigma_{l}$ is the length scale. The hyper parameters $\sigma_{s}^2$ and $\Sigma_{l}$ can be determined by maximizaing the marginal likelihood \cite{rasmussen2003gaussian}.

Formally, let $D_k^i=\{(\mathbf{x}_t^i,y_t^i)\}_{t\in M_k}$ be the training dataset sampled by the robot $i$, where $t$ is the sampling time. $M_k$ is the set of sampling time indices up to time $k$. With the dataset $D_k^i$ of size $m_k^i$, we can simply define the input data matrix as $\mathbf{X}_k^i=[\mathbf{\bar{x}}_{1}^i,\cdots,\mathbf{\bar{x}}_{m_k^i}^i]^\top\in \mathbb{R}^{m_k^i\times 3}$ and the output data vector as
$\mathbf{y}_k^i=[\bar{y}_{1}^i, \cdots, \bar{y}_{m_k^i}^i]^\top\in \mathbb{R}^{m_k^i\times 1}$. According to the test point $\mathbf{x}\in\mathbb{X}_w$, the posterior distribution over $f(\mathbf{x})$ by robot $i$ is derived as follows:
\begin{ceqn}
\begin{equation}\label{eq:posterior_distribution}
p(f(\mathbf{x})|\mathbf{X}_k^i,\mathbf{y}_k^i,\mathbf{x}) \sim \mathcal{N}(\bar{f}^i(\mathbf{x}),\Sigma^i(\mathbf{x})),\\
\end{equation} 
\end{ceqn}
where
\begin{subequations}\label{eq:GP_mean_var}
\begin{align}
\bar{f}^i(\mathbf{x})&=\mathbf{k}^\top(\mathbf{X}_k^i,\mathbf{x})(\mathbf{K}(\mathbf{X}_k^i,\mathbf{X}_k^i)+\sigma_{v}^{2}I)^{-1}\mathbf{y}_k^i \label{eq:GP_mean_var(a)},\\
\begin{split}
    \Sigma^i(\mathbf{x})&= \kappa(\mathbf{x},\mathbf{x}) \label{eq:GP_mean_var(b)}\\
    &\ \ -\mathbf{k}^\top(\mathbf{X}_k^i,\mathbf{x})(\mathbf{K}(\mathbf{X}_k^i,\mathbf{X}_k^i)+\sigma_{v}^{2}I)^{-1}\mathbf{k}(\mathbf{X}_k^i,\mathbf{x}).
  \end{split}
\end{align}
\end{subequations}
$\mathbf{K}(\mathbf{X}_k^i,\mathbf{X}_k^i)$ is the $m_k^i\times m_k^i$ kernel matrix whose $(u,v)$-th element is $\kappa(\mathbf{\bar{x}}_u^i,\mathbf{\bar{x}}_v^i)$ for $\mathbf{\bar{x}}_u^i,\mathbf{\bar{x}}_v^i \in \mathbf{X}_k^i$. $\mathbf{k}(\mathbf{X}_k^i,\mathbf{x})$ is the $m_k^i\times 1$ column vector that is also obtained in the same way.

\subsection{Informative Path Planning}\label{subsec:informative_planning}

To obtain the better description of a spatial process model, robots perform informative path planning. It maximizes the \textit{information gain} $\mathbb{I}(;)$, which is the mutual information between the process $f$ and measurements $\mathbf{y}$:

\begin{align}\label{eq:information_gain}
\mathbb{I}(\mathbf{y};f)=\textrm{H}(\mathbf{y})-\textrm{H}(\mathbf{y}|f),
\end{align}
where $\textrm{H}(\cdot)$ is the \textit{entropy} of a random variable. Let $X_{1:k}^i$ be the possible trajectory of robot $i$ and $X_{1:k}=\{X_{1:k}^1,\cdots,X_{1:k}^n\}$ be the possible trajectories of all robots. Then, the multi-robot team's global objective function $J(X_{1:k})$ is defined as follows:

\begin{align}\label{eq:global_objective_function}
J(X_{1:k})=\mathbb{I}(\mathbf{y}_{1:k};f).
\end{align}
$\mathbf{y}_{1:k}$ is the measurements corresponding to $X_{1:k}$. As a result, the optimal trajectories for all agents are defined as follows:

\begin{ceqn}
\begin{equation}\label{eq:optimal_trajectory_all}
\renewcommand{\arraystretch}{1.2}% like cases
\setlength{\arraycolsep}{-1pt}%
\begin{array}{ r>{{}}l @{\ } l @{\ } r>{{}}l @{\ } l }
X_{1:k}^{*}&=&\argmax\limits_{X_{1:k}}J(X_{1:k})\\
&=&\argmax\limits_{X_{1:k}}\mathbb{I}(\mathbf{y}_{1:k};f)\\
&=&\argmax\limits_{X_{1:k}}(\textrm{H}(\mathbf{y}_{1:k})-\textrm{H}(\mathbf{y}_{1:k}|f)).
\end{array}
\end{equation} 
\end{ceqn}
In this study, the entropy $\textrm{H}(\cdot)$ is obtained using GP. With the result of GP estimation \eqref{eq:GP_mean_var}, the optimal trajectory generation for each agent will be addressed in Section \ref{sec:monte_carlo}.

\section{Environmental Learning: Distributed Gaussian Process}\label{sec:spatial_model_learning}
In this section, we expand the conventional GP in Section \ref{subsec:conventional_gp} to the distributed GP. The first step is to expand the conventional kernel \eqref{eq:kernel} to be an infinite sum of eigenfunctions. Then, the expanded kernel is used to make a finite-dimensional GP estimator, and the estimator is reformulated to a distributed form. With a consecutive state update rule, the GP estimator works in a distributed manner.

\subsection{Karhunen–Loève (KL) Kernel Expansion}
Let the usual GP consider $n$ robots. We can simply define the input data matrix for $n$ robots as $\mathbf{X}_k=[\mathbf{X}_k^{1\top},\cdots,\mathbf{X}_k^{n\top}]^\top\in \mathbb{R}^{mn\times 3}$. For simplicity, it is assumed that $m_k^i$'s are same for all robots, and we omit the subscript $k$, so $m^i_k=m$ hereafter. With the matrix $\mathbf{X}_k$, the usual GP requires all the sampled data $\mathbf{X}_k$ and inversion of $K(\mathbf{X}_k,\mathbf{X}_k)$ with $O((mn)^{3})$ operations. These requirements are impractical when peer-to-peer communication is only used, and the computational burden also increases depending on the data size. For this reason, a new kernel method is needed. The kernel \eqref{eq:kernel} can be expanded in terms of eigenfunctions $\phi_{e}$ and corresponding eigenvalues $\lambda_{e}$ as follows \cite{levy2008karhunen}:

\begin{align}\label{eq:eigenfunction_decomposition2}
\kappa(\mathbf{x}',\mathbf{x}'')=\sum\limits_{e=1}^{+\infty}\lambda_{e}\phi_{e}(\mathbf{x}')\phi_{e}(\mathbf{x}''),
\end{align}
where $\lambda_{e}\phi_{e}(\mathbf{x}')=\int_{\mathbb{X}_w}\kappa(\mathbf{x}',\mathbf{x}'')\phi_{e}(\mathbf{x}'')d\mu(\mathbf{x}'')$.
\iffalse
\begin{align}\label{eq:eigenfunction_decomposition1}
\lambda_{e}\phi_{e}(\mathbf{x})=\int_{\chi}\kappa(\mathbf{x},\mathbf{x}')\phi_{e}(\mathbf{x}')d\mu(\mathbf{x}').
\end{align}
\fi
It is difficult to derive the kernel eigenfunctions in a closed-form, but the SE kernel expansion has already been obtained via Hermite polynomials, as mentioned in \cite{zhu1997gaussian}. Then, the process model $f$ for the position $\mathbf{x}\in\mathbb{X}_w$ is expanded as

\begin{align}\label{eq:measurement_model2}
\begin{array}{r @{\ } l @{\ } l>{{}}l @{\ } l }
f(\mathbf{x})&=&\sum\limits_{e=1}^{E}a_{e}\phi_{e}(\mathbf{x})+\sum\limits_{e=E+1}^{+\infty}a_{e}\phi_{e}(\mathbf{x})\\
&=&f_{E}(\mathbf{x})+\sum\limits_{e=E+1}^{+\infty}a_{e}\phi_{e}(\mathbf{x}),
\end{array}
\end{align}
where $a_{e}\sim \mathcal{N}(0,\lambda_{e})$ for $e=1,2,\cdots,\infty$. 
\iffalse
\begin{align}\label{eq:measurement_model3}
a_{e}\sim \mathcal{N}(0,\lambda_{e}),\ \ e=1,2,\cdots.
\end{align}
\fi
$f_{E}(\mathbf{x})$ is the $E$-dimensional model of $f(\mathbf{x})$ where $E$ is a constant design parameter. This parameter can be tuned by the SURE strategies \cite{pillonetto2019distributed}. As shown in \cite{zhu1997gaussian}, the optimal $E$-dimensional models can be obtained by a convex combination of the first $E$-kernel eigenfunctions as the size of sampled dataset increases to infinity.

\subsection{Multi-Agent Distributed Gaussian Process}\label{subsec:Multi-agent distributed Gaussian process}
We apply $E$-dimensional approximation to the GP estimator in \eqref{eq:GP_mean_var} to derive the estimation of $f_E(\mathbf{x})$. According to $E$-dimensional approximation, the kernel function \eqref{eq:eigenfunction_decomposition2} can be described as $\kappa(\mathbf{x}',\mathbf{x}'')\approx \sum\limits_{e=1}^{E}\lambda_{e}\phi_{e}(\mathbf{x}')\phi_{e}(\mathbf{x}'')$.
For the input data matrix $\mathbf{X}_k$, kernel matrices included in \eqref{eq:GP_mean_var} are defined by

\begin{subequations}
\begin{align}\label{eq:kernel2}
\renewcommand{\arraystretch}{1.3}
\mathbf{K}(\mathbf{X}_k,\mathbf{X}_k)&= G\Lambda_{E}G^\top,\\
\mathbf{k}(\mathbf{X}_k,\mathbf{x})&=G\Lambda_{E}\Phi(\mathbf{x}),
\end{align}
\end{subequations}
where $\Phi(\mathbf{x}):=
  \begin{bmatrix}
    \phi_{1}(\mathbf{x}),\cdots,\phi_{E}(\mathbf{x})
  \end{bmatrix}^\top$ and $G:=
  \begin{bmatrix}
    \Phi(\mathbf{\bar{x}}_{1}^{1})\cdots\Phi(\mathbf{\bar{x}}_{m}^{1}),\cdots,\Phi(\mathbf{\bar{x}}_{1}^{n})\cdots\Phi(\mathbf{\bar{x}}_{m}^{n})
  \end{bmatrix}^\top$. $\Lambda_{E}$ is the diagonal matrix of kernel eigenvalues.

With \eqref{eq:GP_mean_var(a)} and \eqref{eq:kernel2}, the $E$-dimensional estimator for GP is expressed as follows \cite{pillonetto2019distributed}:

\begin{align}\label{eq:GP_E_dim_mean_estimator}
\renewcommand{\arraystretch}{1.2}% like cases
\setlength{\arraycolsep}{2pt}%
\begin{array}{ r>{{}}l @{\quad} l @{\quad} r>{{}}l @{\quad} l }
\bar{f}_{E}(\mathbf{x})&:=&
  \Phi^\top(\mathbf{x})H_{E}\mathbf{y},
\end{array}
\end{align}
where

\label{eq:GP_E_dim_mean_estimator_sub}
\begin{align}
H_{E}&:=
  \left(\dfrac{G^\top G}{mn}+\dfrac{\sigma_{v}^{2}}{mn}\Lambda_{E}^{-1}\right)^{-1}\dfrac{G^\top}{mn}.
\label{eq:GP_E_dim_mean_estimator_sub(b)}
\end{align}

Because each agent cannot obtain $G$ and $\mathbf{y}$ in \eqref{eq:GP_E_dim_mean_estimator} without a fully connected network, we decompose the associated terms included in \eqref{eq:GP_E_dim_mean_estimator_sub(b)} as follows:
\iffalse
\begin{subequations}\label{eq:GP_decomposition}
\begin{equation}\label{eq:GP_decomposition(a)}
\renewcommand{\arraystretch}{1.2}% like cases
\setlength{\arraycolsep}{2pt}%
\begin{array}{ r>{{}}l @{\quad} l @{\quad} r>{{}}l @{\quad} l }
\ \ \ \ \ \dfrac{G^\top G}{mn}&=
\dfrac{1}{mn}\sum\limits_{i=1}^{n}\sum\limits_{k=1}^{m}\Phi(\mathbf{x}_{k}^{i})\Phi^\top(\mathbf{x}_{k}^{i})\\
&=
\dfrac{1}{n}\sum\limits_{i=1}^{n}\alpha_{m}^{i},\\
\end{array}
\end{equation}
\begin{equation}\label{eq:GP_decomposition(b)}
\renewcommand{\arraystretch}{1.2}% like cases
\setlength{\arraycolsep}{2pt}%
\begin{array}{ r>{{}}l @{\quad\quad\quad\;\,} l @{\quad} r>{{}}l @{\quad} l }
\dfrac{G^\top\mathbf{y}}{mn}&=\dfrac{1}{mn}\sum\limits_{i=1}^{n}\sum\limits_{k=1}^{m}\Phi(\mathbf{x}_{k}^{i})y_{k}^{i}\\
&=
\dfrac{1}{n}\sum\limits_{i=1}^{n}\beta_{m}^{i},
\end{array}
\end{equation} 
\end{subequations}
\fi
\begin{subequations}\label{eq:GP_decomposition}
\begin{equation}\label{eq:GP_decomposition(a)}
\renewcommand{\arraystretch}{1.2}% like cases
\setlength{\arraycolsep}{2pt}%
\begin{array}{ r>{{}}l @{\quad} l @{\quad} r>{{}}l @{\quad} l }
\dfrac{G^\top G}{mn}&=
\dfrac{1}{mn}\sum\limits_{i=1}^{n}\sum\limits_{t=1}^{m}\Phi(\mathbf{\bar{x}}_{t}^{i})\Phi^\top(\mathbf{\bar{x}}_{t}^{i})=
\dfrac{1}{n}\sum\limits_{i=1}^{n}\alpha_{m}^{i},\\
\end{array}
\end{equation}
\begin{equation}\label{eq:GP_decomposition(b)}
\renewcommand{\arraystretch}{1.2}% like cases
\setlength{\arraycolsep}{2pt}%
\begin{array}{ r>{{}}l @{\quad\quad\quad\;\,} l @{\quad} r>{{}}l @{\quad} l }
\dfrac{G^\top\mathbf{y}}{mn}&=\dfrac{1}{mn}\sum\limits_{i=1}^{n}\sum\limits_{t=1}^{m}\Phi(\mathbf{\bar{x}}_{t}^{i})\bar{y}_{t}^{i}=
\dfrac{1}{n}\sum\limits_{i=1}^{n}\beta_{m}^{i},
\end{array}
\end{equation} 
\end{subequations}
where $\alpha_{m}^{i}:=\sum_{t=1}^{m}\Phi(\mathbf{\bar{x}}_{t}^{i})\Phi^\top(\mathbf{\bar{x}}_{t}^{i})/m$ and $\beta_{m}^{i}:=\sum_{t=1}^{m}\Phi(\mathbf{x}_{t}^{i})\bar{y}_{t}^{i}/m$ are \textit{GP states} after the $m$-th sensor measurements. Now \eqref{eq:GP_E_dim_mean_estimator} is reformulated in the following distributed form:

\begin{ceqn}
\begin{equation}\label{eq:GP_E_dim_mean_estimator2}
\renewcommand{\arraystretch}{1.2}% like cases
\setlength{\arraycolsep}{2pt}%
\begin{array}{ r>{{}}l @{\quad} l @{\quad} r>{{}}l @{\quad} l }
\bar{f}_{E}^{i}(\mathbf{x})&:=&
  \Phi^\top(\mathbf{x})\left(\alpha_{m}^{i}+\dfrac{\sigma_{v}^{2}}{mn}\Lambda_{E}^{-1}\right)^{-1}\beta_{m}^{i}.\\

\end{array}
\end{equation} 
\end{ceqn}
As the results of average consensus protocol \cite{saber2003consensus}, \eqref{eq:GP_E_dim_mean_estimator2} converges to \eqref{eq:GP_E_dim_mean_estimator} after iterative communication. Similarly, the distributed form of $\Sigma(\mathbf{x})$ in \eqref{eq:GP_mean_var(b)} is expressed as

\begin{ceqn}
\begin{equation}\label{eq:GP_E_dim_var_estimator}
\renewcommand{\arraystretch}{1.2}% like cases
\setlength{\arraycolsep}{2pt}%
\begin{array}{ r>{{}}l @{\quad} l @{\quad} r>{{}}l @{\quad} l }
\Sigma_{E}(\mathbf{x})
&:=&
\kappa(\mathbf{x},\mathbf{x})-\Phi^\top(\mathbf{x})H_{E}G\Lambda_{E}\Phi(\mathbf{x}),
\end{array}
\end{equation} 
\end{ceqn}

\begin{ceqn}
\begin{equation}\label{eq:GP_E_dim_var_estimator2}
\renewcommand{\arraystretch}{1.2}% like cases
\setlength{\arraycolsep}{-1pt}%
\begin{array}{ r>{{}}l @{\ } l @{\ } r>{{}}l @{\ } l }
\Sigma_{E}^{i}(\mathbf{x}):&=&\kappa(\mathbf{x},\mathbf{x})-\Phi^\top(\mathbf{x})
\left(\alpha_{m}^{i}+\dfrac{\sigma_{v}^{2}}{mn}\Lambda_{E}^{-1}\right)^{-1}\\
&&\times \alpha_{m}^{i} \Lambda_{E}
\Phi(\mathbf{x}).
\end{array}
\end{equation} 
\end{ceqn}

When we compare \eqref{eq:GP_mean_var} with \eqref{eq:GP_E_dim_mean_estimator2} and \eqref{eq:GP_E_dim_var_estimator2}, the computational complexity of the distributed algorithm is $O(E^{3})$, whereas that of the centralized algorithm is $O((mn)^{3})$ due to the matrix inversion \cite{pillonetto2019distributed}. Therefore, the distributed algorithm is more scalable since $E\ll mn$ in general.

\subsection{Online Information Fusion by Moving Agents}\label{subsec:online_information_fusion_by_moving_agents}
If the $(m+1)$-th new training dataset $\{(\mathbf{\bar{x}}_{m+1}^{i},\bar{y}_{m+1}^{i})\}_{i=1}^{n}$ are obtained, $\{\alpha_{m}^{i}\}_{i=1}^{n}$ and $\{\beta_{m}^{i}\}_{i=1}^{n}$ have to be discarded to include new data so that the consensus process must be restarted from scratch. To avoid repeated restarts and keep the continuity of environmental estimate, we introduce the online information fusion algorithm.

Let us assume that the sensor measurement frequencies of all agents are same for convenience. The update rule of $\alpha_{m}^{i}$ and $\beta_{m}^{i}$ is defined as follows:

\begin{ceqn}
\begin{equation}\label{eq:GP_alpha_beta}
\renewcommand{\arraystretch}{1.6}% like cases
\setlength{\arraycolsep}{2pt}%
\begin{array}{ r>{{}}l @{\ } l @{\ } r>{{}}l @{\ } l }

\alpha_{m+1}^{i}&=&(1-r)\alpha_{m}^{i}+r\Phi(\mathbf{\bar{x}}_{m+1}^{i})\Phi^\top(\mathbf{\bar{x}}_{m+1}^{i}),\\

 \beta_{m+1}^{i}&=&(1-r)\beta_{m}^{i}+r\Phi(\mathbf{\bar{x}}_{m+1}^{i})y_{m+1}^{i},
\end{array}
\end{equation} 
\end{ceqn}
where $\alpha_{0}^{i}=0$ and $\beta_{0}^{i}=0$. This rule is an infinite impulse response (IIR) filter. If $r=(m+1)^{-1}$, The update rule reflects all dataset equally, so it is suitable for static environmental learning. If $r>(m+1)^{-1}$, this rule reflects more of the recent data, so it is suitable for dynamic environmental learning. Simulation results for each environmental learning are shown in Chapter \ref{sec:simulation}.

\begin{theorem}\label{th:update}
Using an average consensus protocol and update rules in \eqref{eq:GP_alpha_beta}, new data are successively fused with the existing $\{\alpha_{m}^{i}\}_{i=1}^{n}$ and $\{\beta_{m}^{i}\}_{i=1}^{n}$, so that $\{\alpha_{m+1}^{i}\}_{i=1}^{n}$ and $\{\beta_{m+1}^{i}\}_{i=1}^{n}$ converge towards \eqref{eq:GP_decomposition(a)} and \eqref{eq:GP_decomposition(b)}, respectively, in a distributed manner.
\end{theorem}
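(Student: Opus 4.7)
The plan is to prove the claim by induction on the measurement index $m$, showing that the IIR update rule in \eqref{eq:GP_alpha_beta} preserves the invariant that the arithmetic mean of the local states across all agents equals the centralized quantities in \eqref{eq:GP_decomposition(a)} and \eqref{eq:GP_decomposition(b)}, and then invoking the standard convergence of average consensus \cite{saber2003consensus} to propagate that mean to every individual agent. Because $\alpha$ and $\beta$ are updated identically in structure (one with $\Phi(\bar{x})\Phi^\top(\bar{x})$, the other with $\Phi(\bar{x})\bar{y}$), it suffices to carry out the argument for $\alpha$; the $\beta$ case is symbolically identical.

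For the base case $m=0$, the initialization $\alpha_0^i=0$ makes both the local mean and the centralized target vanish. For the inductive step, suppose that after the consensus rounds at step $m$ all agents hold a common state $\bar{\alpha}_m = \tfrac{1}{n}\sum_{j=1}^n\alpha_m^j$, which by the inductive hypothesis equals $\tfrac{1}{mn}\sum_{j=1}^n\sum_{t=1}^m \Phi(\bar{x}_t^j)\Phi^\top(\bar{x}_t^j)$. Choosing $r=(m+1)^{-1}$ in \eqref{eq:GP_alpha_beta} and averaging the update over $i$ gives
\begin{equation*}
\frac{1}{n}\sum_{i=1}^n \alpha_{m+1}^i
= \frac{m}{m+1}\,\bar{\alpha}_m + \frac{1}{(m+1)n}\sum_{i=1}^n \Phi(\bar{x}_{m+1}^i)\Phi^\top(\bar{x}_{m+1}^i),
\end{equation*}
which, after substituting the inductive hypothesis for $\bar{\alpha}_m$ and combining the two sums, is exactly the right-hand side of \eqref{eq:GP_decomposition(a)} evaluated at time $m+1$. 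For $r>(m+1)^{-1}$, the same calculation produces a weighted sum that can be interpreted as the IIR-filtered target, preserving the same mean-consistency statement at each step.

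Given that the per-agent mean equals the desired centralized quantity after the update, the second step is to apply the average consensus protocol on the time-varying communication graph induced by $\mathcal{N}_k^i$. Under the standard assumption that the graph remains connected (or jointly connected over bounded windows), iterated peer-to-peer averaging is known to drive the local states to the network-wide mean; hence $\alpha_{m+1}^i \to \tfrac{1}{n}\sum_{j=1}^n \alpha_{m+1}^j$ and $\beta_{m+1}^i \to \tfrac{1}{n}\sum_{j=1}^n \beta_{m+1}^j$ for every $i$, closing the induction.

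The subtle point, and the main obstacle to a fully rigorous statement, lies in the interleaving between consensus and measurement updates: in practice only finitely many consensus iterations occur between two measurements, so each agent carries a residual discrepancy that is then scaled by $(1-r)$ through the IIR step and mixed with fresh disagreement from the new samples. The clean convergence above therefore relies on either assuming the consensus loop reaches steady state between measurements, or on showing that the contraction of the consensus step dominates the injection of new disagreement; in the latter case one would bound the residual by a geometric series in the spectral gap of the consensus matrix and the filter coefficient $r$. I would state the theorem under the idealized assumption and remark on the practical finite-iteration case separately.
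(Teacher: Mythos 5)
The paper does not actually present a proof of Theorem~\ref{th:update}: it defers entirely to the appendix of \cite{jang2020multi}, so there is no in-text argument to compare against line by line. That said, your route is the natural one for this statement and is essentially sound: the identity
\begin{equation*}
\frac{1}{n}\sum_{i=1}^n \alpha_{m+1}^i
= \frac{m}{m+1}\cdot\frac{1}{n}\sum_{i=1}^n\alpha_m^i + \frac{1}{(m+1)n}\sum_{i=1}^n \Phi(\mathbf{\bar{x}}_{m+1}^i)\Phi^\top(\mathbf{\bar{x}}_{m+1}^i)
\end{equation*}
with $r=(m+1)^{-1}$ does telescope to \eqref{eq:GP_decomposition(a)} at step $m+1$, the $\beta$ case is identical, and average consensus supplies convergence of each local copy to the network mean. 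You are also right to flag that the uniform-average target is only recovered for $r=(m+1)^{-1}$; for $r>(m+1)^{-1}$ the limit is the exponentially weighted analogue, which matches the paper's own remark about static versus dynamic environments.

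One refinement would strengthen your argument and partially dissolve the ``subtle point'' you raise. Your inductive hypothesis assumes all agents hold a \emph{common} state after the consensus rounds at step $m$, which is exactly the idealization you then have to apologize for. This is stronger than needed: a doubly stochastic consensus iteration preserves the network average $\frac{1}{n}\sum_i\alpha_m^i$ \emph{exactly} after any finite number of rounds, so the invariant worth carrying through the induction is that the network average equals the centralized quantity --- and that invariant holds unconditionally, independent of how many consensus iterations are interleaved. Finite iteration counts only affect each agent's \emph{deviation} from that average, and your proposed geometric-series bound (contraction by the spectral gap versus injection of new disagreement scaled by $r$) is the right tool for that separate, purely quantitative question. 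Restructuring the proof around the exactly-preserved mean plus a decaying disagreement term would give a clean statement without the steady-state assumption.
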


\begin{proof}
See the Appendix in \cite{jang2020multi}.
\end{proof}

\begin{figure}
\begin{center}
\includegraphics[trim = 0mm 0mm 0mm 0mm, clip, width=8.5cm]{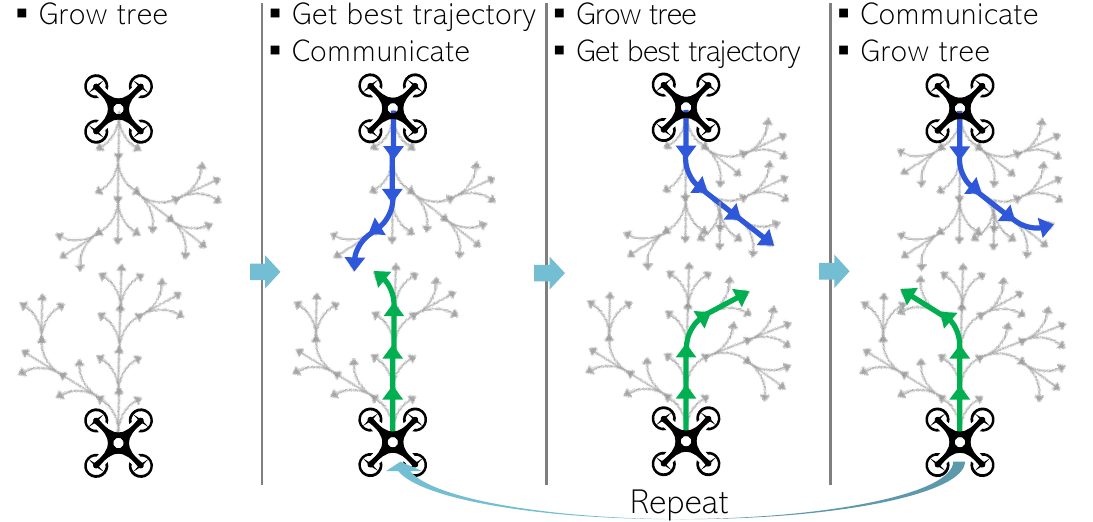} 
\caption{Illustration of the distributed MCTS process. Nearby robots exchange their predicted trajectories. These trajectories are used to temporarily update GPs and grow search trees. This process is repeated until the time budget is met.} 
\label{fig:distributed_mcts}
\end{center}
\vspace{-0.5cm}
\end{figure}

\section{Path Planning: Distributed Monte Carlo Tree Search}\label{sec:monte_carlo}
Using the distributed model learning discussed in the previous section, all agents create a local environmental map that converges to the global environmental map even in a distributed network. To find the most promising search trajectories with the learned map, all agents should consider every possible action. However, because the cardinality of possible action set grows exponentially with respect to the number of robots, the distributed planning strategy is needed in multi-robot path planning \cite{du2021parallelized}. In \cite{best2019dec}, the decentralized MCTS approach is studied to alleviate the cardinality of possible action set from $\lvert \mathcal{A} \rvert^n$ to $\lvert \mathcal{A} \rvert$, where $\mathcal{A}$ represents the discrete action space of each robot. We apply this advantage to our GP-based informative planning of multiple robots. With the distributed MCTS, each robot calculates a promising trajectory by communication with neighboring agents only. This process is shown in Fig. \ref{fig:distributed_mcts}. This section introduces the trajectory merging method to reflect the neighbor's path in each agent's tree search process. The contents of this section are summarized in Algorithm \ref{al:tree} and \ref{al:distributedMCTS}.

\subsection{Trajectory merging}
For each agent $i$, $X_{k+1:k+T}^i=(\mathbf{\hat{x}}_{k+1}^i,\cdots,\mathbf{\hat{x}}_{k+T}^i)$ denotes the predicted trajectory with the prediction length $T$, or it can be represented by $X_T^i$ for brevity. 
Assuming that the agent $i$ receives the predicted trajectories of neighboring agents $X_T^{i-}=\{{X_T^{j}}\}_{j\in\mathcal{N}_{i}}$, we modify the GP sate $\alpha_{m}^{i}$ as follows:
\begin{ceqn}
\begin{equation}\label{eq:GP_alpha_beta2}
\renewcommand{\arraystretch}{1.6}% like cases
\setlength{\arraycolsep}{2pt}%
\begin{array}{ r>{{}}l @{\ } l @{\ } r>{{}}l @{\ } l }

\hat{\alpha}_{m}^{i}&(X_T^{i-})=\dfrac{mn}{mn+\mathrm{n}(X_T^{i-})}\alpha_{m}^{i}\\
+&\dfrac{\mathrm{n}(X_T^{i-})}{mn+\mathrm{n}(X_T^{i-})}\sum\limits_{j\in\mathcal{N}_i}\sum\limits_{t=1}^{T}\Phi(\mathbf{\hat{x}}_{k+t}^{j})\Phi^\top(\mathbf{\hat{x}}_{k+t}^{j}).
\end{array}
\end{equation} 
\end{ceqn}
$\mathrm{n}(X_T^{i-})$ is the number of sensing points included in $X_T^{i-}$. With \eqref{eq:GP_alpha_beta2} and the $E$-dimensional estimator in \eqref{eq:GP_E_dim_var_estimator2}, we define the trajectory-merged GP estimator as follows:

\begin{ceqn}
\begin{equation}\label{eq:GP_E_dim_var_estimator3}
\renewcommand{\arraystretch}{1.2}% like cases
\setlength{\arraycolsep}{-1pt}%
\begin{array}{ r>{{}}l @{\ } l @{\ } r>{{}}l @{\ } l }
\hat{\Sigma}_{E}^{i}(\mathbf{x}):&=&\kappa(\mathbf{x},\mathbf{x})\\
&&-\Phi^\top(\mathbf{x})
\left(\hat{\alpha}_{m}^{i}+\dfrac{\sigma_{v}^{2}}{mn+\mathrm{n}(X_T^{i-})}\Lambda_{E}^{-1}\right)^{-1}\\
&&\times\hat{\alpha}_{m}^{i}\Lambda_{E}
\Phi(\mathbf{x}).
\end{array}
\end{equation} 
\end{ceqn}
In this way, predicted trajectories of neighboring agents are temporarily included in the acquired data set of the GP estimator. Because \eqref{eq:GP_alpha_beta2} and \eqref{eq:GP_E_dim_var_estimator3} are temporary values for tree search in distributed MCTS, they do not affect $\alpha_{m}^{i}$ and disappear after getting new predicted trajectories. This process is summarized in the Distributed MCTS block of Fig. \ref{fig:structure}. With this result, the path planning process will be explained in the next section.

\subsection{Informational reward function}\label{subsec:informational_reward}
As we mentioned in \eqref{eq:global_objective_function}, the information gain $\mathbb{I}$ is the objective function we have to maximize. With the definition in \eqref{eq:optimal_trajectory_all}, the optimal trajectory considering neighboring paths is defined as follows.

\begin{ceqn}
\begin{equation}\label{eq:optimal_trajectory}
\renewcommand{\arraystretch}{1.2}% like cases
\setlength{\arraycolsep}{-1pt}%
\begin{array}{ r>{{}}l @{\ } l @{\ } r>{{}}l @{\ } l }
X_T^{i*}&=&\argmax\limits_{X_T^{i}\in \mathbb{X}_{k}^{i}}J(X_T^{i}\cup X_T^{i-}\cup X_{1:k})\\
&=&\argmax\limits_{X_T^{i}\in \mathbb{X}_{k}^{i}}\mathbb{I}(\mathbf{y}_{T}^{i} \cup \mathbf{y}_{T}^{i-}\cup \mathbf{y}_{1:k};f),
\end{array}
\end{equation} 
\end{ceqn}
$\mathbf{y}_T^i$ and $\mathbf{y}_T^{i-}$ are the measurements corresponding to $X_T^i$ and $X_T^{i-}$, respectively. $\mathbb{X}_{k}^{i}$ is the domain of possible trajectories for agent $i$. As shown in \eqref{eq:information_gain}, information gain is represented with entropies as follows:

\begin{ceqn}
\begin{equation}
\renewcommand{\arraystretch}{1.2}% like cases
\setlength{\arraycolsep}{-1pt}%
\begin{array}{ r>{{}}l @{\ } l @{\ } r>{{}}l @{\ } l }
\mathbb{I}(\mathbf{y}_{T}^{i}\cup \mathbf{y}_{T}^{i-}\cup \mathbf{y}_{1:k};f)&=&\textrm{H}(\mathbf{y}_{T}^{i}\cup \mathbf{y}_{T}^{i-}\cup \mathbf{y}_{1:k})\\
&&-\textrm{H}(\mathbf{y}_{T}^{i}\cup \mathbf{y}_{T}^{i-}\cup \mathbf{y}_{1:k}|f).\\
\end{array}
\end{equation} 
\end{ceqn}
Using the mesuarement model \eqref{eq:measurement_model} and the entropy calculation for the normal distribution \cite{srinivas2012information}, conditional entropy becomes 

\begin{ceqn}
\begin{equation}
\renewcommand{\arraystretch}{1.2}% like cases
\setlength{\arraycolsep}{-1pt}%
\begin{array}{ r>{{}}l @{\ } l @{\ } r>{{}}l @{\ } l }
\textrm{H}(\mathbf{y}_{T}^{i}\cup \mathbf{y}_{T}^{i-}\cup \mathbf{y}_{1:k}|f)=\frac{1}{2}\log{|2\pi e \sigma_{v}^{2}I|}.
\end{array}
\end{equation} 
\end{ceqn}
$\textrm{H}(\mathbf{y}_{T}^{i}\cup \mathbf{y}_{T}^{i-}\cup \mathbf{y}_{1:k})$ is decomposed using conditional entropy, and it is obtained by calculating the entropy of GP as follows:

\begin{ceqn}
\begin{equation}
\renewcommand{\arraystretch}{1.2}% like cases
\setlength{\arraycolsep}{-1pt}%
\begin{array}{ r>{{}}l @{\ } l @{\ } r>{{}}l @{\ } l }
\textrm{H}(\mathbf{y}_{T}^{i}\cup \mathbf{y}_{T}^{i-}\cup \mathbf{y}_{1:k})&=&\textrm{H}(\mathbf{y}_{T}^{i-}\cup \mathbf{y}_{1:k})+\textrm{H}(\mathbf{y}_{T}^{i}|\mathbf{y}_{T}^{i-}\cup \mathbf{y}_{1:k})\\
&\approx&\textrm{H}(\mathbf{y}_{T}^{i-}\cup \mathbf{y}_{1:k})+\frac{1}{2}\log{|2\pi e \hat{\Sigma}_{E}^{i}(X_{T}^{i})|}.\\
\end{array}
\end{equation}
\end{ceqn}
As a result, with the trajectory-merged GP estimator in \eqref{eq:GP_E_dim_var_estimator3}, the optimal trajectory for agent $i$ is defined as follows:

\begin{ceqn}
\begin{equation}\label{eq:optimal_trajectory2}
\renewcommand{\arraystretch}{1.2}% like cases
\setlength{\arraycolsep}{-1pt}%
\begin{array}{ r>{{}}l @{\ } l @{\ } r>{{}}l @{\ } l }
X_{T}^{i*}&=&\argmax\limits_{X_{T}^{i}\in \mathbb{X}_{k}^{i}}J(X_{T}^{i}\cup X_{T}^{i-}\cup X_{1:k})\\
&\approx&\argmax\limits_{X_{T}^{i}\in \mathbb{X}_{k}^{i}}\frac{1}{2}\log{|2\pi e \hat{\Sigma}_{E}^{i}(X_{T}^{i})|}\\
&=&\argmax\limits_{X_{T}^{i}\in \mathbb{X}_{k}^{i}}\mathcal{R}^{i}(X_{T}^{i}).\\
\end{array}
\end{equation} 
\end{ceqn}
We call $\mathcal{R}^{i}(\cdot)$ the \textit{informational reward function}, which is utilized in the tree search algorithm.

\begin{figure}
\begin{center}
\includegraphics[width=0.45\textwidth]{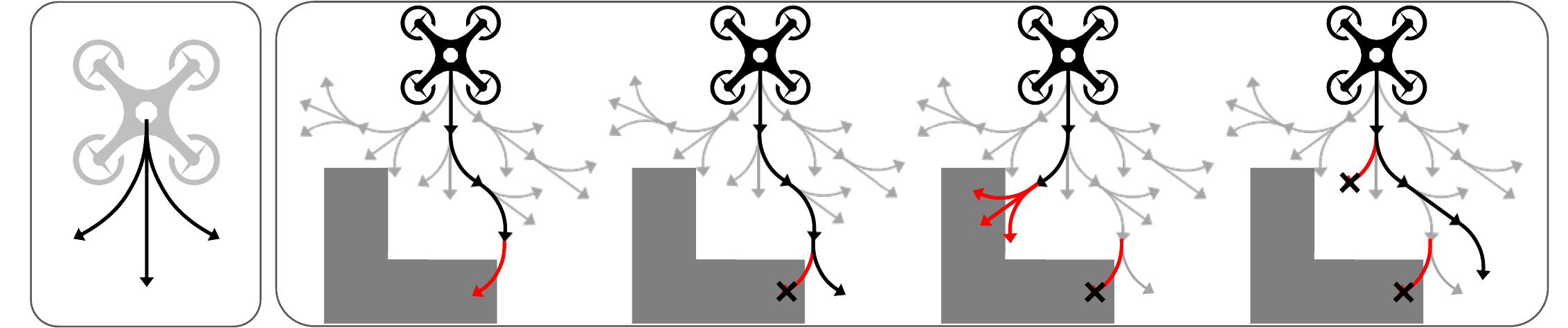} 
\caption{Node closing method for obstacle avoidance in MCTS. (left) finite action space that the robot can take. (right) tree expansion and node closing process.} 
\label{fig:closing}
\end{center}
\vspace{-0.5cm}
\end{figure}

\subsection{Tree Search with D-UCB Alogrithm}
Using the informational reward function defined in \ref{subsec:informational_reward}, the tree search algorithm iteratively explores and evaluates predictive path candidates according to the discounted upper confidence bound (D-UCB) rule to find the optimal path. D-UCB rule assigns the probabilistic search priority to the action candidates.

The tree structure consists of nodes $s$ and edges $(s,a)$ for all legal actions $a\in \mathcal{A}(s)$. Each edge contains a set of variables $\{N_{s}^{a}, W_{s}^{a}, \tau_{s}^{a}, C_{s}^{a}\}$ where $N_{s}^{a}$ is the visit count, $W_{s}^{a}$ is the total action value, $\tau_{s}^{a}$ is the number of iterations for tree search (shown in line 5 of Algorithm \ref{al:distributedMCTS}), and $C_{s}^{a}$ is a closing variable which will be discussed. We follow the tree search process in Algorithm \ref{al:tree} and the distributed MCTS with GP in Algorithm \ref{al:distributedMCTS}. The MCTS process can be divided into four main steps as follows.

\subsubsection{Selection} (lines 4, 12-22 of Algorithm \ref{al:tree}) The selection phase focuses on finding a leaf node $s_{\textit{leaf}}$. Following the D-UCB rule, the selected action at node $s$ is defined as follows \cite{srinivas2012information}:

\begin{ceqn}
\begin{equation}\label{eq:UCB_action}
\renewcommand{\arraystretch}{1.2}% like cases
\setlength{\arraycolsep}{-1pt}%
\begin{array}{ r>{{}}l @{\ } l @{\ } r>{{}}l @{\ } l }
a_{t}=\argmax\limits_{a\in\mathcal{A}(s)}\left(\dfrac{W_{s}^{a}\gamma^{\tau-\tau_{s}^{a}}}{N_{s}^{a}\gamma^{\tau-\tau_{s}^{a}}}+U_{s}^{a}\right),
\end{array}
\end{equation} 
\end{ceqn}
where
\begin{ceqn}
\begin{equation}\label{eq:UCB1}
\renewcommand{\arraystretch}{1.2}% like cases
\setlength{\arraycolsep}{-1pt}%
\begin{array}{ r>{{}}l @{\ } l @{\ } r>{{}}l @{\ } l }
U_{s}^{a}=\sqrt{\dfrac{\ln\sum_{a'\in\mathcal{A}(s)}{N_{s}^{a'}\gamma^{\tau-\tau_{s}^{a'}}}}{N_{s}^{a}\gamma^{\tau-\tau_{s}^{a}}}}.
\end{array}
\end{equation} 
\end{ceqn}
The first term on the right-hand side of \eqref{eq:UCB_action} means exploration term for the tree search, and the second term means exploitation term. As shown in Algorithm \ref{al:distributedMCTS}, each agent periodically receives the predicted trajectories of adjacent agents, which are utilized in the tree search process. It means that the tree, obtained by using previously given trajectories, may not be optimal when new neighboring trajectories are received. Therefore, adopting the discount factor $\gamma$ makes the previously visited nodes less influential on the current UCB value.

If the current node has no selectable actions because of path blockage, the node closes ($C_{s}^{a}\leftarrow 1$) and the algorithm returns to the parent node to restart the selection process. We call this process as node closing method illustrated in Fig. \ref{fig:closing} and lines 15-21 of Algorithm \ref{al:tree}.

\renewcommand{\algorithmiccomment}[1]{\(\hfill\hfill\hfill\hfill\hfill\hfill\hfill\hfill\hfill\hfill\hfill\hfill\hfill\hfill\hfill\hfill\triangleright\ \)#1}
\normalem %%%% disable auto underline
\begin{algorithm}[t]
\caption{Tree Search with Obstacle Avoidance}\label{al:tree}
\DontPrintSemicolon
	\Fn{TreeSearch$(T_{k}^{i},\tau)$}{
  	$s_{0}\leftarrow$ \emph{getRootNode}$(T_{k}^{i})$\;
  	\For{$t\leftarrow 1\ ${\normalfont to }$N_{\text{iteration}}$}    
    {
  		$s_{\textit{leaf}}\leftarrow$ \emph{selection}$(s_{0},\tau)$\;
  		$(s_{t},a)\leftarrow$ \emph{expansion}$(s_{\textit{leaf}},\tau)$\;
  		\If{collisionCheck$(s_{t})$}{
  		$C_{s_{\textit{leaf}}}^{a}\leftarrow$ \textit{true}\;   
  		\Continue\;
  		}
  		$r_{t}\leftarrow$ \emph{simulation}$(s_{t})$\;
  		\emph{backprop}$(s_{t},r_{t})$\;
		}
		\KwRet $T_{k}^{i}$\;
  }

	\Fn{selection$(s_{\textit{leaf}},\tau)$}{
		\While{\Not leafNodeFound}
		{
			\If{\Not $depth(s_{leaf})>T$}
			{
				\If{$C_{s_{\textit{leaf}}}^{a}=\textit{true}\ \forall{a\in\mathcal{A}(s_{\textit{leaf}})}$}
				{
					$C_{s_{\textit{leaf}-1}}^{a_{\textit{leaf}-1}}\leftarrow \textit{true}$\;
					back to the parent node\;
					$s_{\textit{leaf}}\leftarrow s_{\textit{leaf}-1}$\;
				}
				\Else{$a\leftarrow$D-UCB($s_{\textit{leaf}},\tau$)\Comment{eq. \eqref{eq:UCB_action}}\;
				$s_{\textit{leaf}}\leftarrow getNode(s_{\textit{leaf}},a)$\;}
			}
		}
		\KwRet $s_{\textit{leaf}}$\;
  }

	\Fn{simulation$(s_{t})$}{
  	$X_{T}^{i}\leftarrow$ trajectory from $s_{0}$ to $s_{t}$\;
  	\For{$j\leftarrow\ $depth$(s_{t})\ ${\normalfont to }$T$}    
    {
  		$X_{T}^{i}\leftarrow\{X_{T}^{i},\textit{randomWalk}(\mathbf{x}_{k+j}^{i})\}$\;
		}
		\KwRet $\mathcal{R}^{i}(X_{T}^{i})$\;
  }  
\end{algorithm}
\ULforem %%%% enable auto underline

\subsubsection{Expansion} (line 5 of Algorithm \ref{al:tree}) 
The expansion phase expands the selected node with uniformly sampled action from the action space $\mathcal{A}(s)$ if the depth of the selected node does not exceed the search depth $T$. When the expanded node collides with an obstacle, the algorithm closes this edge ($C_{s}^{a}\leftarrow 1$) and returns to the selection phase.

\subsubsection{Simulation} (lines 9, 23-27 of Algorithm \ref{al:tree}) In the simulation phase, it calculates the informational reward $\mathcal{R}_{i}(X_T^i)$ of the selected trajectory. If the selected node's depth is less than the search depth $T$, it performs random walks. After that, the reward is calculated with the predicted trajectory $X_{T}^{i}$ as shown in Section \ref{subsec:informational_reward}.

\subsubsection{Backpropagation} (line 10 of Algorithm \ref{al:tree}) The edge variables are updated in a backward pass. The visit counts are incremented, $N_{s}^{a}\leftarrow N_{s}^{a}\gamma^{\tau-\tau_{s}^{a}}+1$, and the total action value is updated, $W_{s}^{a}\leftarrow W_{s}^{a}\gamma^{\tau-\tau_{s}^{a}}+R_{i}$. As described in the \textit{selection} step, the discount factor $\gamma$ is applied to reduce the weight of the previous value.

\renewcommand{\algorithmiccomment}[1]{\(\hfill\hfill\hfill\hfill\hfill\hfill\hfill\hfill\hfill\hfill\hfill\hfill\hfill\hfill\hfill\hfill\triangleright\ \)#1}

\normalem %%%% disable auto underline
\begin{algorithm}[t]
\caption{Distributed MCTS with GP for agent $i$}\label{al:distributedMCTS}
\DontPrintSemicolon
  \KwInput{$\mathbf{x}^{i}_{k},\alpha^{i}_{m-1},\beta^{i}_{m-1},X_{T}^{i-}$}
  \KwOutput{$X_{T}^{i}$}
  $y^{i}_{k}\leftarrow$\emph{getMeasurement}$(\mathbf{x}^{i}_{k})$\Comment{eq. \eqref{eq:measurement_model}}\;
  $(\alpha^{i}_{m},\beta^{i}_{m})\leftarrow$\emph{updateGP}$(\alpha^{i}_{m-1},\beta^{i}_{m-1},\mathbf{x}^{i}_{k},y^{i}_{k})$\Comment{eq. \eqref{eq:GP_alpha_beta}}\;
  $(\alpha_{m}^{i},\beta_{m}^{i})\leftarrow$\emph{GPconsensus}$(\alpha_{m}^{i},\beta_{m}^{i})$\Comment{eq. (18) in \cite{jang2020multi}}\;
  $T_{k}^{i}\leftarrow$\emph{initializeTree}$(\mathbf{x}_{k}^{i})$\;
	
	\For{$\tau\leftarrow 1\ ${\normalfont to }$N_{\text{search}}$}    
	{
		$(\hat{\alpha}_{m}^{i},\hat{\beta}_{m}^{i})\leftarrow$\emph{TrajectoryMerging}$(X_{T}^{i-})$\Comment{eq. \eqref{eq:GP_alpha_beta2}}\;
  	$T_{k}^{i}\leftarrow$ \emph{TreeSearch}$(T_{k}^{i},\hat{\alpha}_{m}^{i},\hat{\beta}_{m}^{i},\tau)$\Comment{Algorithm \ref{al:tree}}\;
  	$X_{T}^{i}\leftarrow$ \emph{getBestTraj}$(T_{k}^{i})$\;
  	$X_{T}^{i-}\leftarrow$ \emph{communicateTraj}$(X_{T}^{i})$\;
  }
	\KwRet $X_{T}^{i}$\;
\end{algorithm}
\ULforem %%%% enable auto underline

\section{Simulation Result}\label{sec:simulation}
This section presents environmental learning simulations on the various situations. The first simulation is on a time-invariant synthetic environment, and the second is on a dynamic environment based on the real-world meteorological dataset. These environmental models are unknown a priori, and each robot obtains the sensory data from the current location. Furthermore, since the communication range is finite, some agents may not be able to communicate with each other.

\begin{figure*}[ht]
\begin{center}
\includegraphics[trim = 0mm 0mm 0mm 0mm, clip, width=0.9\textwidth]{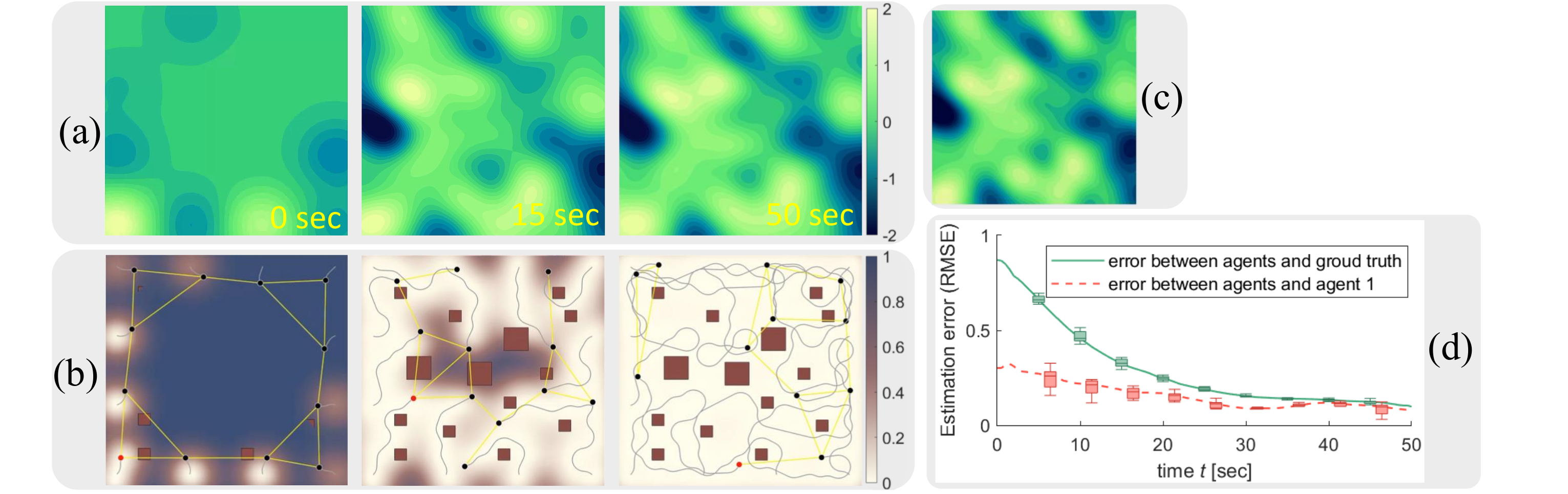}
\caption{Simulation 1-A. The process of the environmental model construction by 12 agents with fully distributed informative planning. (a) Change of GP estimate and (b) uncertainty propagation over time from $0$ to $50$ seconds in order from the left figure. (c) Ground truth of environmental model. Yellow lines in (b) indicate communication links between agents. All presented results are obtained by agent $\#1$ (red dot in (b)). (d) Environmental model estimation error. The solid green line shows the box plot of RMSE values between all agents and the ground truth. The dashed red line shows the box plot of RMSE values between all agents and the agent $\#1$. This graph means that all GP estimation results converge to the same result with only local communication.}
\label{fig:simulation1}
\end{center}
\vspace{-0.5cm}
\end{figure*}

\begin{figure*}[ht]
\centering
\begin{minipage}{0.3\textwidth}
{\includegraphics[width=1.0\textwidth]{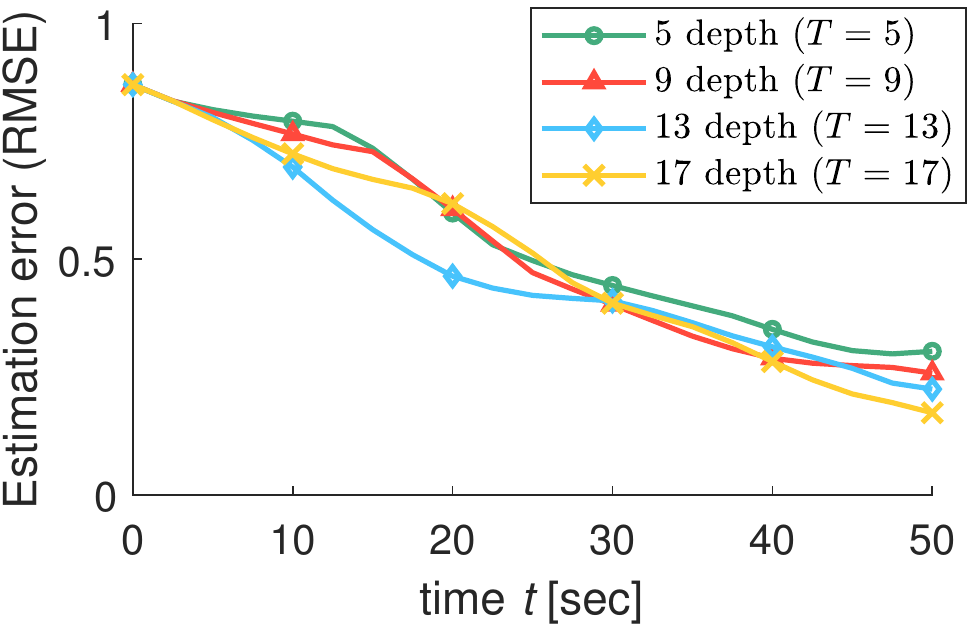}}
\caption*{(a) Search depths}
\end{minipage}
\begin{minipage}{0.3\textwidth}
\centering
{\includegraphics[width=1.0\textwidth]{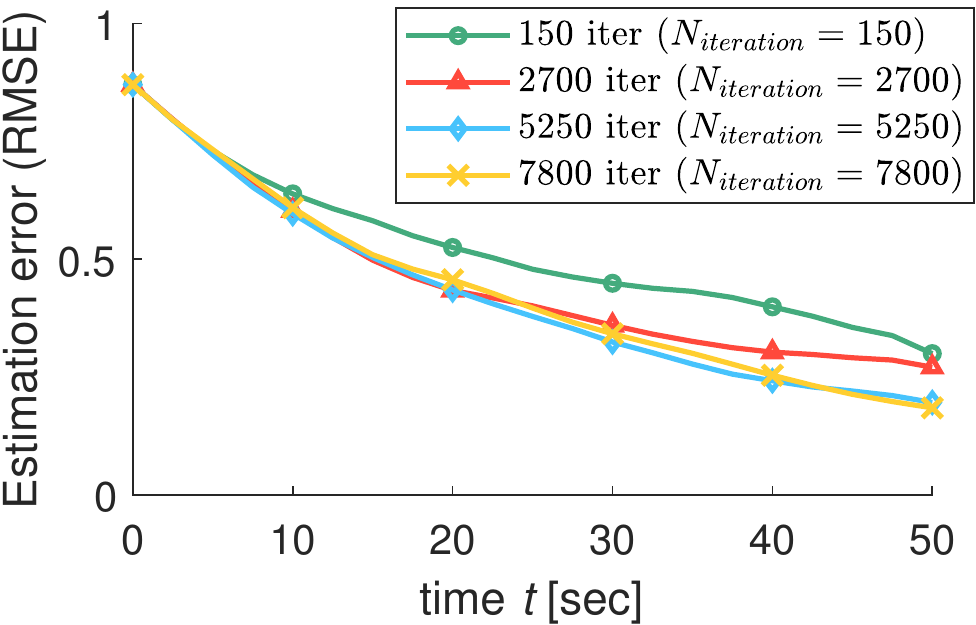}}
\caption*{(b) Number of iterations}
\end{minipage}
\begin{minipage}{0.3\textwidth}
\centering
{\includegraphics[width=1.0\textwidth]{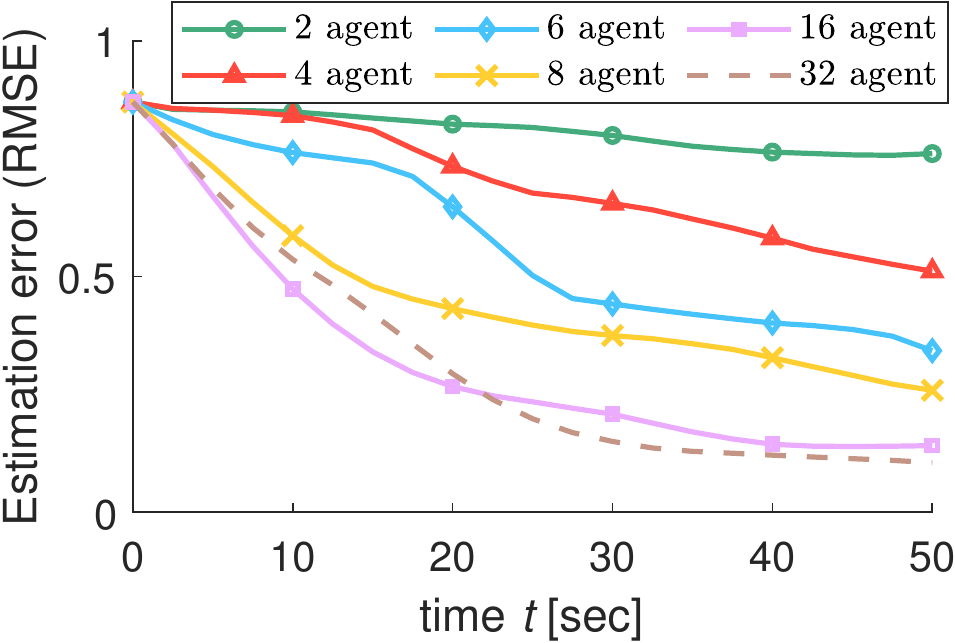}}
\caption*{(c) Number of agents}
\end{minipage}
\caption{Simulation 1-B. Environmental model construction with different conditions. (a) 6 agents' exploration results with different search depths. (b) 8 agents' exploration results with the different number of iterations. (c) Exploration results with the different number of agents.}
\label{fig:simulation2}
\vspace{-0.5cm}
\end{figure*}

\subsection{Simulation 1 - synthetic environment learning}
We perform the fully distributed informative planning simulation for multiple agents. They conduct exploration to obtain an estimate of the environmental map, considering collision avoidance and coordination. They can communicate only with neighbors within a range of 10 m (the map size is $20$ m $\times$ $20$ m) and move at $1$ m/s constantly. We set $\sigma_{s}^2=1$ and $\Sigma_l=\text{diag}([0.02, 0.02])$ for the Gaussian kernel \eqref{eq:kernel}, and we set $E=80$ for $E$-dimensional estimator \eqref{eq:GP_E_dim_mean_estimator} and \eqref{eq:GP_E_dim_var_estimator}.

The progress over time from $0$ to $50$ seconds is shown in Fig. \ref{fig:simulation1}. As shown in Figs. \ref{fig:simulation1}(a)-(b), twelve agents search the map together and generate the GP estimate presenting the ground truth model in \ref{fig:simulation1}(c). The agents scatter naturally and find the next locations to be updated based on the variance map. Also, as they avoid the places where the estimate is already reliable, they can minimize the redundant actions that can decrease the exploration efficiency. Through Fig. \ref{fig:simulation1}(b), it can be confirmed that the information of all agents is diffused through a communication link.

Although Figs. \ref{fig:simulation1}(a)-(b) show results from agent $\#1$ only, all the distributed GP estimates of each agent converge to the same by the average consensus as shown in Fig. \ref{fig:simulation1}(d). In other words, all agents do not simply use local information only in the exploration process but construct a global GP estimation map in a distributed manner.

\begin{figure}[ht]
\centering
\includegraphics[width=0.4\textwidth]{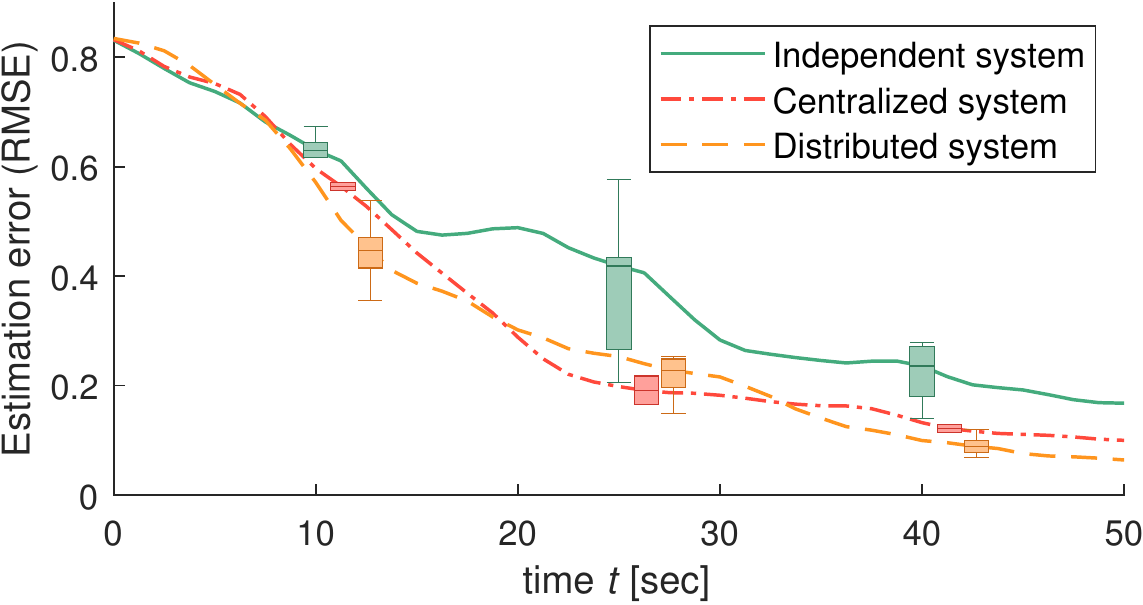} 
\caption{Simulation 1-C. 4 agents' exploration results with different type of systems for 10 trials (with different environments).} 
\label{fig:simulation3}
\vspace{-0.5cm}
\end{figure}

\begin{figure*}[ht]
\begin{center}
\includegraphics[trim = 0mm 0mm 0mm 0mm, clip, width=0.95\textwidth]{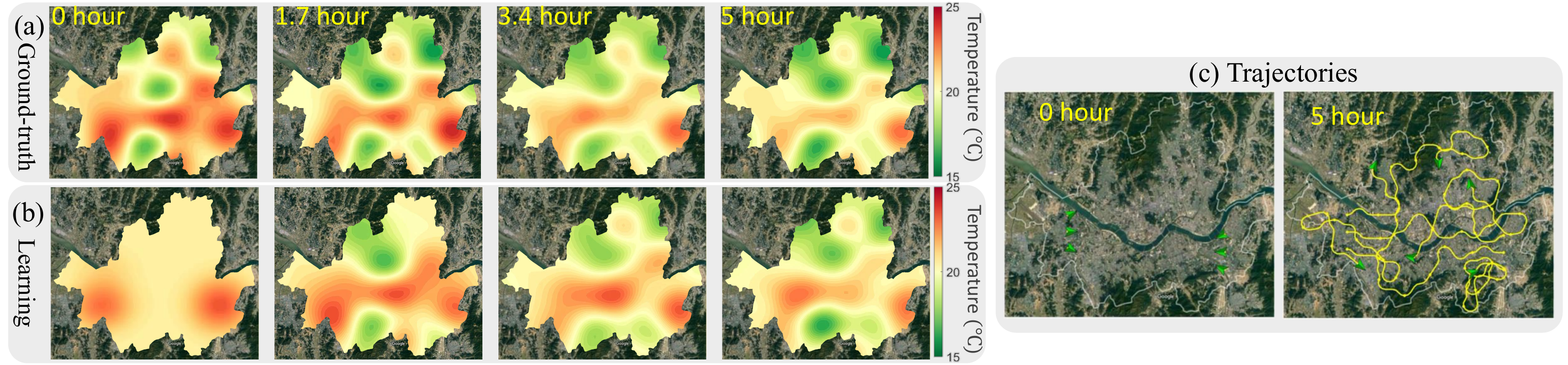}
\caption{Simulation 2. The process of the real-world heat map reconstruction performed by 6 moving UAVs with fully distributed informative planning. (a) Actual change of the heat map and (b) Environmental learning result from $0$ to $5$ hours in order from the left figure. (c) Initial locations and trajectories of UAVs.}
\label{fig:simulation4}
\end{center}
\vspace{-0.5cm}
\end{figure*}

We conduct more simulations in various environments to investigate the factors that affect search performance. In the tree search algorithm, the search depth $T$ and the number of iterations $N_{iterations}$ are the factors that directly affect the search result. The simulation results in Figs. \ref{fig:simulation2}(a)-(b) show that the search performance is proportional to both $T$ and $N_{iterations}$. The deeper the search, the more distant paths are considered. As the number of searching iterations increases, the probability of finding an optimal route increases. Fig. \ref{fig:simulation2}(c) shows that the search performance can be improved as the number of agents increases through a distributed algorithm. From these results, we can see that our algorithm is scalable for a large number of robots as well. 

Fig. \ref{fig:simulation3} compares the search performance of distributed systems, centralized systems, and independent systems. In the independent system, agents explore the area without communication. The centralized system has a central server that gathers all the information regardless of the communication range, and the server calculates paths for all agents.
Because the action space of centralized system $(\mathrm{n}(\mathcal{A})^n)$ is much bigger than that of distributed system $(\mathrm{n}(\mathcal{A}))$, we set about 22 times more $N_{iterations}$ for the centralized system than the distributed system. Even with limited communication and much fewer iterations, the distributed system performs similarly to the centralized system.

\subsection{Simulation 2 - real-world dataset environmental learning}
This section presents simulation result for the exploration in a dynamic environment (Fig. \ref{fig:overview}), using a real-world meteorological dataset. The simulation uses temperature data collected from weather stations in Seoul, South Korea \cite{ncdc}. The reason we choose the weather data for Seoul is that weather stations are densely distributed (the area of Seoul is $605.25\ \text{km}^{2}$). We create a heat map for a ground truth based on the data measured from 0 am to 5 am on July 16th, 2020.

In this scenario, a team of UAVs flies over the search area and gathers temperature data from the current UAV's location. They fly at a constant velocity of 20 km/h. Because their communication distance is limited to 20 km, sometimes they can be disconnected from one another. 

Some snapshots taken during the simulation are shown in Fig. \ref{fig:simulation4}. The ground truth over time is shown in Fig. \ref{fig:simulation4}(a), and the GP estimation is shown in Fig. \ref{fig:simulation4}(b). After 1.7 hours, the estimation result is similar to the ground truth. After that, the results track the true value continuously even when the actual environment changes.

\section{Conclusions}
This paper presents fully distributed robotic sensor networks to obtain a global environmental model estimate. We combine the Gaussian process with the Monte Carlo tree search in a distributed manner for peer-to-peer communication. Our method allows multiple robots to collaboratively perform exploration, taking into account collision avoidance and coordination. We validate our algorithm in various environments, including a time-varying temperature monitoring task using a real-world dataset. The results confirm that multiple agents can successfully explore the environment, and it is scalable with the increasing number of agents in the distributed network.

% if have a single appendix:
%\appendix[Proof of the Zonklar Equations]
% or
%\appendix  % for no appendix heading
% do not use \section anymore after \appendix, only \section*
% is possibly needed

% use appendices with more than one appendix
% then use \section to start each appendix
% you must declare a \section before using any
% \subsection or using \label (\appendices by itself
% starts a section numbered zero.)
%

% use section* for acknowledgment
%\section*{Acknowledgment}
%The authors would like to thank...

% Can use something like this to put references on a page
% by themselves when using endfloat and the captionsoff option.
\ifCLASSOPTIONcaptionsoff
  \newpage
\fi

% trigger a \newpage just before the given reference
% number - used to balance the columns on the last page
% adjust value as needed - may need to be readjusted if
% the document is modified later
%\IEEEtriggeratref{8}
% The "triggered" command can be changed if desired:
%\IEEEtriggercmd{\enlargethispage{-5in}}

% references section

% can use a bibliography generated by BibTeX as a .bbl file
% BibTeX documentation can be easily obtained at:
% http://mirror.ctan.org/biblio/bibtex/contrib/doc/
% The IEEEtran BibTeX style support page is at:
% http://www.michaelshell.org/tex/ieeetran/bibtex/
%\bibliographystyle{IEEEtran}
% argument is your BibTeX string definitions and bibliography database(s)
%\bibliography{IEEEabrv,../bib/paper}
%
% <OR> manually copy in the resultant .bbl file
% set second argument of \begin to the number of references
% (used to reserve space for the reference number labels box)

% biography section
% 
% If you have an EPS/PDF photo (graphicx package needed) extra braces are
% needed around the contents of the optional argument to biography to prevent
% the LaTeX parser from getting confused when it sees the complicated
% \includegraphics command within an optional argument. (You could create
% your own custom macro containing the \includegraphics command to make things
% simpler here.)
	
\vspace{-1cm}
\begin{IEEEbiography}[{\includegraphics[width=1in,height=1.25in,clip,keepaspectratio]{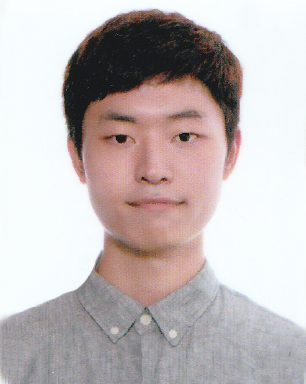}}]
{Dohyun Jang} received the B.S. degree in Electrical Engineering from Korea University in 2017, and the M.S. degree in Mechanical and Aerospace Engineering from Seoul National University, Seoul, in 2019. He is currently a Ph.D. Candidate in the School of Aerospace Engineering, SNU. His research interests include distributed systems, networked systems, machine learning, and robotics.
\end{IEEEbiography}

\vspace{-1cm}
\begin{IEEEbiography}[{\includegraphics[width=1in,height=1.25in,clip,keepaspectratio]{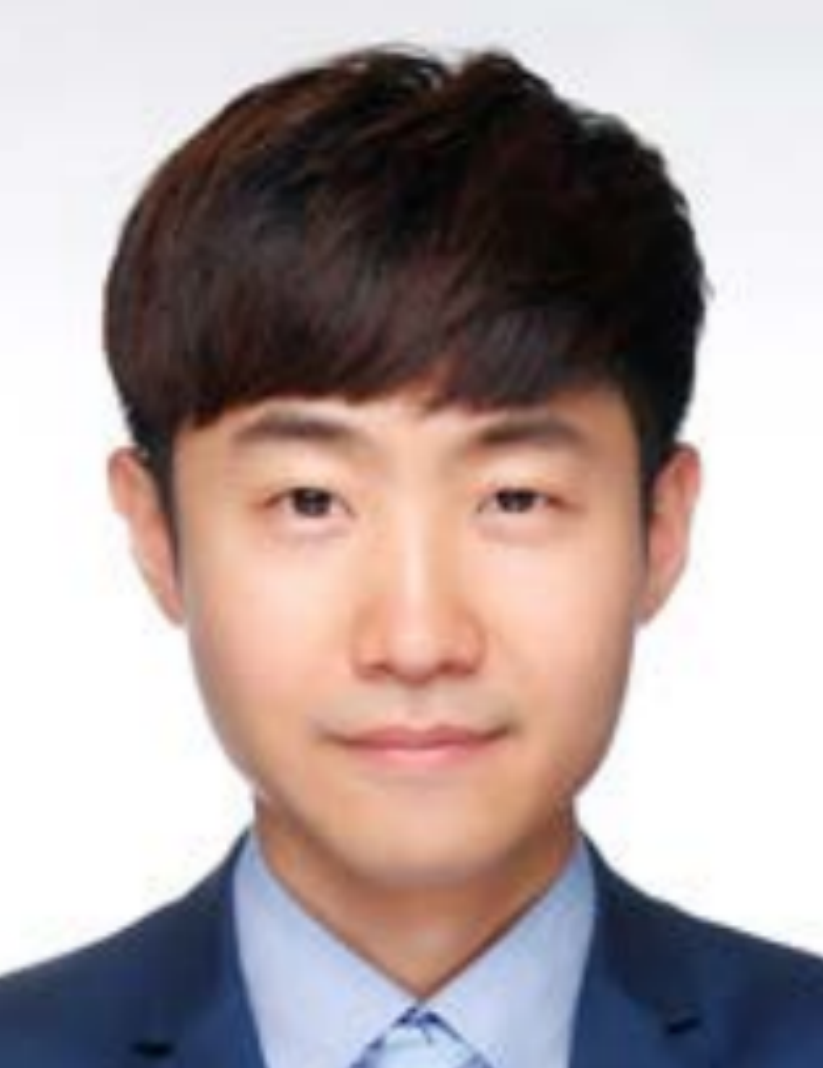}}]
{Jaehyun Yoo} received the Ph.D. degree in the School of Mechanical and Aerospace Engineering, Seoul National University, Seoul, in 2016. He was a postdoctoral researcher at the School of Electrical Engineering and Computer Science, KTH Royal Institute of Technology, Stockholm, Sweden. He is currently a Professor at the School of AI,
Sungshin Women’s University. His research interests include machine learning, indoor localization, automatic control, and robotic systems.
\vspace{8cm}
\end{IEEEbiography}

\vspace{-1cm}
\begin{IEEEbiography}[{\includegraphics[width=1in,height=1.25in,clip,keepaspectratio]{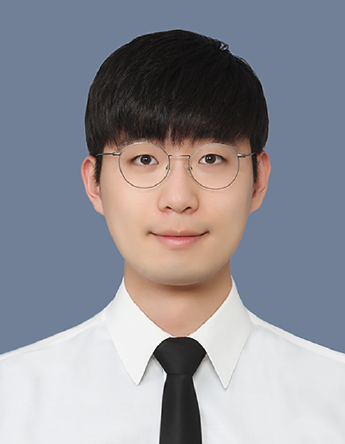}}]
{Clark Youngdong Son} Clark Youngdong Son received the B.S. degree in Mechanical Engineering from Sungkyunkwan University, and the Ph.D. degree in Mechanical and Aerospace Engineering from Seoul National University, in 2015 and 2021, respectively. He is currently a Staff Engineer at Mechatronics R\&D Center, Samsung Electronics. His research interests include robotics, path planning, and optimal control.
%\\ \\ 
\end{IEEEbiography}

\vspace{-1cm}
\begin{IEEEbiography}[{\includegraphics[width=1in,height=1.25in,clip,keepaspectratio]{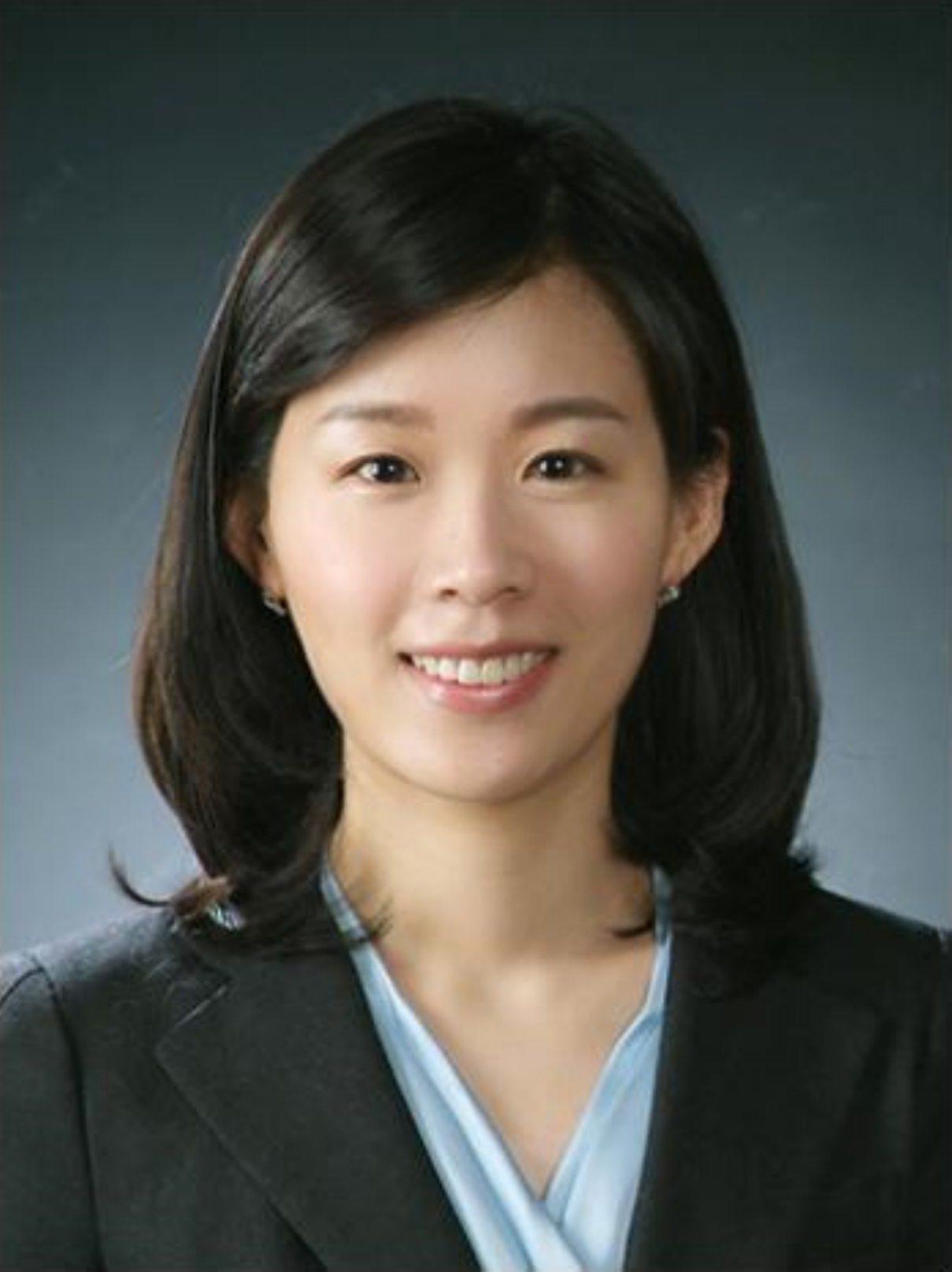}}]
{H. Jin Kim} received the B.S. degree from Korea Advanced Institute of Technology (KAIST) in 1995, and the M.S. and Ph.D. degrees in Mechanical Engineering from University of California, Berkeley, in 1999 and 2001, respectively. 

From 2002 to 2004, she was a Postdoctoral Researcher in Electrical Engineering and Computer Science, UC Berkeley. In 2004, she joined the Department of Mechanical and Aerospace Engineering at Seoul National University as an Assistant Professor, where she is currently a Professor. Her research interests include intelligent control of robotic systems and motion planning. 
%\\ \\ 
\vspace{16cm}
\end{IEEEbiography}

% that's all folks
\end{document}